\newcommand{\famreg}{\textsc{FAM}\xspace}
\newtheorem*{lemma*}{Lemma}
\newcommand{\mbf}[1]{\mathbf{#1}}
\newcommand{\loss}{\ell}
\newcommand{\RR}{\mathbb{R}}
\newcommand{\Xcal}{\mathcal{X}}
\newcommand{\Ycal}{\mathcal{Y}}
\newcommand{\Ecal}{\mathcal{E}}
\newcommand{\w}{\mathbf{w}}
\newcommand{\W}{\mathbf{W}}
\newcommand{\inner}[2]{\left\langle #1, #2\right\rangle}
\newcommand{\bigo}[1]{\mathcal{O}\left(#1 \right)}
\newif\ifapx
\newcommand{\ourmethod}{\famreg}
\newcommand{\ourmaintitle}{\ourmethod: Relative Flatness Aware Minimization}
\newcommand{\ourtitle}{\ourmaintitle}
\definecolor{darkblue}{rgb}{0,0,0.75}
\icmltitlerunning{\ourtitle}
\begin{document}
	
	\twocolumn[
        \icmltitle{\ourtitle}

        \begin{icmlauthorlist}
        \icmlauthor{Linara Adilova}{rub}
        \icmlauthor{Amr Abourayya}{ikim}
        \icmlauthor{Jianning Li}{ikim}
        \icmlauthor{Amin Dada}{ikim}
        \icmlauthor{Henning Petzka}{lund}
        \icmlauthor{Jan Egger}{ikim,graz}
        \icmlauthor{Jens Kleesiek}{ikim}
        \icmlauthor{Michael Kamp}{ikim,rub,monash}
        
        \end{icmlauthorlist}
        
        \icmlaffiliation{ikim}{Institute for AI in medicine (IKIM) at University Hospital Essen, Essen, Germany}
        \icmlaffiliation{rub}{Ruhr-University Bochum, Bochum, Germany}
        \icmlaffiliation{monash}{Monash University, Melbourne, Australia}
        \icmlaffiliation{graz}{Graz University, Graz, Austria}
        \icmlaffiliation{lund}{Lund University, Lund, Sweden}
        
        \icmlcorrespondingauthor{Linara Adilova}{linara.adilova@rub.de}

        \icmlkeywords{flatness, relative flatness, regularization, optmization, generalization, deep learning}
        
        \vskip 0.3in
    ]
    
    \printAffiliationsAndNotice{}  

	\begin{abstract}
		Flatness of the loss curve around a model at hand has been shown to empirically correlate 
with its generalization ability.
Optimizing for flatness has been proposed as early as 1994 by Hochreiter and Schmidthuber, and was followed by more recent successful sharpness-aware optimization techniques. 
Their widespread adoption in practice, though, is dubious because of 
the lack of theoretically grounded connection between flatness and generalization, in particular in light of the reparameterization curse---certain reparameterizations of a neural network change most flatness measures but do not change generalization.
Recent theoretical work suggests that a particular relative flatness measure can be connected to generalization and solves the reparameterization curse. 
In this paper, we derive a regularizer based on this relative flatness that is easy to compute, fast, efficient, and works with arbitrary loss functions.
It requires computing the Hessian only of a single layer of the network, which makes it applicable to large neural networks, and with it avoids an expensive mapping of the loss surface in the vicinity of the model. 
In an extensive empirical evaluation we show that this relative flatness aware minimization (FAM) improves generalization in a multitude of applications and models, both in finetuning and standard training. 
We make the code available at \href{https://github.com/kampmichael/RelativeFlatnessAndGeneralization/tree/main/RelativeFlatnessRegularizer(FAM)}{github}.


	\end{abstract}
	
	\section{Introduction}
It has been repeatedly observed that the generalization performance of a model at hand correlates with flatness of the loss curve, i.e., how much the loss changes under perturbations of the model parameters~\citep{entropySGD, keskarLarge, foret2021sharpnessaware, zheng2020regularizing, sun2020exploring, adversarial_weightPerturbation, fisherRao, yao2018hessian}. The large-scale study by \citet{jiang2020fantastic} finds that such flatness-based measures have a higher correlation with generalization than alternatives like weight norms, margin-, and optimization-based measures. The general conclusion is that flatness-based measures show the most consistent correlation with generalization.

Naturally, optimizing for flatness promises to obtain better generalizing models. \citet{hochreiter1994simplifying} proposed a theoretically solid approach to search for large flat regions by maximizing a box around the model in which the loss is low. More recently, it was shown that optimizing a flatness-based objective together with an L2-regularization performs remarkably well in practice on a variety of datasets and models~\citep{foret2021sharpnessaware}. 
The theoretical connection to generalization has been questionable, though, in particular in light of negative results on reparametrizations of ReLU neural networks~\citep{dinhSharp}: these reparameterizations change traditional measures of flatness, yet leave the model function and its generalization unchanged, making these measures unreliable.

Recent work~\citep{petzka2021relative} has shown that generalization can be rigorously connected to flatness of the loss curve, resulting in a relative flatness measure that solves the reparameterization issue. That is, the generalization gap of a model $f:\Xcal\rightarrow\Ycal$ depends on properties of the training set and a measure
\[
\kappa(\w^l) := \sum_{s,s'=1}^d \langle \w^l_s,\w^l_{s'}\rangle  \cdot Tr(H_{s,s'}(\w^l))\enspace ,
\]
where $\w^l\in\RR^{d\times m}$ are the weights between a selected layer $l$ with $m$ neurons and layer $l+1$ with $d$ neurons. Further, $\langle \w^l_s,\w^l_{s'}\rangle=\w^l_s(\w_{s'}^l)^T$ is the scalar product of two row vectors (composed of the weights into neurons with index $s$ and $s'$ in layer $l+1$), $Tr$ denotes the trace, and $H_{s,s'}$ is the Hessian matrix containing all partial second derivatives with respect to weights in rows $\w^l_s$ and $\w^l_{s'}$: 
\begin{equation*}
    H_{s,s'}(\w,f(S))=\left [\frac{\partial^2 \mathcal{E}_{emp}(f,S)}{\partial w_{s,t} \partial w_{s',t'}} \right ]_{1\leq t,t'\leq m}\enspace .
\end{equation*}
Here, $\mathcal{E}_{emp}$ is the empirical risk
\[
\Ecal_{emp}(f,S)=\frac 1{n} \sum_{i=1}^{n} \loss(f(x_i),y_i)
\]
on a dataset
\[
S=\{(x_1,y_1),\dots,(x_n,y_n)\}\subset \Xcal\times\Ycal\enspace .
\]
It is demonstrated that, measured on the penultimate layer, this measure highly correlates with generalization.
Sharpness-aware minimization (SAM)~\citep{foret2021sharpnessaware} also optimizes for a measure of flatness, but is not reparameterization invariant---even under L2-regularization its invariance is unclear, in particular wrt. neuronwise reparameterizations.
The reparamterization-invariant extension of SAM, ASAM~\citep{kwon2021asam} is not theoretically connected to generalization.

In this paper, we implement the \textit{relative flatness} measure of \citet{petzka2021relative} as a regularizer for arbitrary loss functions and derive its gradient for optimization. A remarkable feature of the relative flatness measure is that it is only applied to a single layer of a neural network, in comparison to classical flatness (and sharpness) which takes into account the entire network. \citet{petzka2021relative} have shown that relative flatness in this layer corresponds to robustness to noise on the representation produced by this layer. Therefore, FAM nudges the entire network to produce a robust representation in the chosen layer. 
At the same time, it does not require flatness wrt.\ the other weights, opening up the design space for good minima. Since it suffices to compute relative flatness wrt.\ a single layer, this regularizer and its gradient can be computed much more efficiently than any full-Hessian based flatness measure. Moreover, since the gradient can be computed directly, no double backpropagation is required.

In an extensive empirical evaluation we show that the resulting relative flatness aware minimization (FAM) improves the generalization performance of neural networks in a wide range of applications and network architectures: We improve test accuracy on image classification tasks (CIFAR10, CIFAR100, SVHN, and FashionMNIST) on ResNET18 (outperforming reported best results for this architecture), WideResNET28-10, and EffNet-B7 and compare it to SAM regularizer. In a second group of experiments we reduce DICE-loss substantially on a medical shape reconstruction tasks using autoencoders and stabilize the language model finetuning.

\pagebreak
Our contributions are 

(i) a novel regularizer (FAM) based on relative flatness that is easy to implement, flexible, and compatible with any thrice-differentiable loss function, and 

(ii) an extensive empirical evaluation where we show that FAM regularization improves the generalization performance of a wide range of neural networks in several applications.

	\section{Related Work}

\subsubsection{Flatness as generalization measure} 
Flatness of the loss surface around the weight parameters is intimately connected to the amount of information that the model with these parameters can be described with, i.e., if the region is flat enough and loss does not change, the parameters can be described with less precision still allowing to have a good performing model.
Correspondingly, the models in the flat region generalize better: \citet{hochreiter1994simplifying} investigated a regularization that leads to a flatter region in the aforementioned sense.
Their results have shown that indeed such optimization leads to better performing models.
Following up, flatness of a minimizer was used to explain generalization abilities of differently trained neural networks~\citep{keskar2016large}, where it was specifically emphasized that calculation of a Hessian for modern models is prohibitively costly.
Originating from the minimum description length criteria for finding better generalizing learning models, flatness became a pronounced concept in the search for generalization criteria of large neural networks.
The PAC-Bayes generalization bound rediscovers the connection of the Hessian as flatness characteristic with the generalization gap and the large-scale empirical evaluation~\citep{jiang2020fantastic} shows that all the generalization measures based on flatness (in some definition) highly correlate with the actual performance of models.

\citet{petzka2021relative} considered an analytical approach to connect flatness with generalization, resulting in the measure of \emph{relative flatness} that is used for regularization in this work. 
Their approach splits the generalization gap into two parts termed feature robustness and representativeness.
While representativeness measures how representative the training data is for its underlying distribution, feature robustness measures the loss at small perturbations in feature layers.
Under the assumption of representative data, so that the training data can indeed be used to learn something about the true underlying distribution, feature robustness governs the generalization gap. 
Further, at a local minimum, feature robustness is exactly described by relative flatness if target labels are locally constant (i.e., labels do not change under small perturbations of features).
This condition of locally constant labels is identified as a necessary condition for connecting flatness to generalization.
Finally, the paper demonstrates how the measure of relative flatness solves the reparameterization-curse discussed in \citet{dinh2017sharp}, rendering itself as a good candidate for an impactful regularizer.

\subsubsection{Regularizing optimization} 
Regularization (implicit or explicit) is de facto considered to be an answer to the good generalization abilities of an overparametrized model.
New elaborate techniques of regularization allow to beat state-of-the-art results in various areas.
Obviously, flatness can be considered as a good candidate for a structural regularization, but since the size of the modern models grew significantly after the work of \citet{hochreiter1994simplifying}, straightforward usage of the initial flatness measures is not feasible in the optimization.
Analogously, approaches to flatness stimulation from averaging over solutions~\citep{izmailov2018averaging} cannot be backpropagated and directly used in the optimization process.
The closest research to the flatness optimization is related to adversarial robustness---adversarial training aims at keeping the loss of a model on a constant (low) level in the surrounding of the training samples, which can be also done in the feature space~\citep{wu2020adversarial}.
Several recent works proposed an optimizer for neural networks that is approximating the minimax problem of minimizing loss in the direction of the largest loss in the surrounding of the model.
One of them, sharpness aware minimization (SAM)~\citep{foret2021sharpnessaware} achieves state-of-the-art results in multiple tasks, e.g., SVHN, and allows for simple backpropagation through the proposed loss.
However, the exact proposed $m$-sharpness does not entirely correspond to the theoretical motivation proposed by \citet{foret2021sharpnessaware} based on PAC-Bayes generalization bound, which might mean that the empirical success of SAM and its variants~\citep{kwon2021asam, zhuangsurrogate, du2021efficient, liu2022towards, liu2022random} cannot be explained by theoretical PAC-Bayes flatness of the solution~\citep{andriushchenko2022towards, wen2022does}.
Thus, introducing a theoretically grounded flatness regularizer can be of an interest for community.


	\section{Flatness Aware Minimization}
\label{sec:algo}

In the following we give a detailed description of the proposed regularization.
For a differentiable loss function $\loss(S,\W)$ and a training set $S$, the regularized objective is
\[
\loss(S,\W) + \lambda \kappa(\w^l)\enspace ,
\]
where $\lambda$ is the regularization coefficient and $\w^l\in\RR^{m\times d}$ denote the weights from layer $l$ to $l+1$. To optimize this objective, we compute its gradient (and omit the training set $S$ in the notation for compactness):
\begin{equation}
\nabla_\W \left(\loss(\W) + \lambda \kappa(\w^l)\right) = \nabla_\W \loss(\W) + \lambda \nabla_\W \kappa(\w^l)
\label{eq:famloss}
\end{equation}
Here, $\nabla_\W \loss(\W)$ is the standard gradient of the loss function. It remains to determine $\nabla_\W  \kappa(\w^l)$.
\begin{lemma} For a neural network with $L$ layers and weights $\W=(\w^1,\dots,\w^L)$ with $\w^k\in\RR^{O^k\times P^k}$ and a specific layer $l\in [L]$ with weights $\w^l\in\RR^{d\times m}$ it holds that
\begin{equation*}
    \begin{split}
        &\nabla_\W \kappa(\w^l) = e^l \left[2\sum_{s=1}^d w^l_s Tr\left(H_{s,i}\right)\right]_{i\in [d]}+\\
        &\left(\left[\sum_{s,s'=1}^d \inner{w^l_s}{w^l_{s'}} \sum_{t=1}^m \frac{\partial^3 \loss(\W)}{\partial\w^k_{o,p}\partial\w^l_{s,t}\partial\w^l_{s',t}}\right]_{\substack{\\p\in [P^k] \\ o \in [O^k]}}\right)_{k\in [L]}
    \end{split}
\end{equation*}
where $e^l$ denotes the $l$-th standard unit vector in $\RR^L$.
\label{lm:derivative}
\end{lemma}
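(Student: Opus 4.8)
The plan is to differentiate $\kappa$ by a direct application of the product and chain rules, tracking indices carefully. First I would expand the measure into scalar form using $\mathrm{Tr}(H_{s,s'}) = \sum_{t=1}^m \partial^2\loss/\partial w^l_{s,t}\partial w^l_{s',t}$ and $\inner{\w^l_s}{\w^l_{s'}} = \sum_{t=1}^m w^l_{s,t} w^l_{s',t}$, so that $\kappa(\w^l) = \sum_{s,s'=1}^d \inner{\w^l_s}{\w^l_{s'}} \sum_{t=1}^m \partial^2\loss/\partial w^l_{s,t}\partial w^l_{s',t}$. The crucial observation is that $\W$ enters $\kappa$ in two independent ways: explicitly through the inner-product prefactors $\inner{\w^l_s}{\w^l_{s'}}$, which depend only on the selected layer $\w^l$, and implicitly through the Hessian traces, which depend on the \emph{whole} network since $\loss$ does. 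Differentiating with respect to a generic weight $w^k_{o,p}$ therefore splits, by the product rule, into a ``prefactor'' term and a ``Hessian'' term, and the lemma amounts to identifying these two terms.

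For the prefactor term I would differentiate $\inner{\w^l_s}{\w^l_{s'}} = \sum_t w^l_{s,t}w^l_{s',t}$. Since this factor involves only $\w^l$, its derivative vanishes whenever $k \ne l$, which is exactly what the unit vector $e^l$ in the statement encodes. For $k = l$, writing the weight index as $(i,p)$, the derivative is $\delta_{s,i} w^l_{s',p} + \delta_{s',i} w^l_{s,p}$; summing the two Kronecker deltas against the symmetric Hessian traces collapses the double sum over $s,s'$ into two single sums that, using $\mathrm{Tr}(H_{i,s'}) = \mathrm{Tr}(H_{s',i})$, coincide. This is where the factor $2$ in $2\sum_{s=1}^d \w^l_s\,\mathrm{Tr}(H_{s,i})$ originates, and collecting over $i\in[d]$ and $p$ reproduces the first matrix in the claim, placed in the $l$-th slot by $e^l$.

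The Hessian term is obtained by differentiating each second derivative once more with respect to $w^k_{o,p}$, which simply appends a third index to the partial and yields $\sum_{s,s'=1}^d \inner{\w^l_s}{\w^l_{s'}} \sum_{t=1}^m \partial^3\loss/\partial w^k_{o,p}\partial w^l_{s,t}\partial w^l_{s',t}$; ranging $(o,p)$ over $[O^k]\times[P^k]$ and then over all layers $k\in[L]$ gives exactly the second block of the statement. Adding the two terms completes the derivation.

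The calculation itself is routine; the main obstacle is bookkeeping rather than analysis. I expect the delicate points to be (a) correctly combining the two Kronecker-delta contributions into the single factor of $2$, which hinges on the symmetry $H_{s,s'} = H_{s',s}^T$ of the loss Hessian (hence $\mathrm{Tr}(H_{s,s'})=\mathrm{Tr}(H_{s',s})$), and (b) keeping the block/tensor indexing consistent so that the layer-wise structure---the vanishing of the prefactor contribution off layer $l$ and the survival of the third-derivative contribution on every layer---is faithfully captured by the $e^l$ and $(\,\cdot\,)_{k\in[L]}$ notation. One should also note the implicit regularity assumption: $\loss$ must be thrice differentiable for the third-order partials to exist and for the mixed partials to commute by Schwarz's theorem, which justifies freely reordering the order of differentiation throughout.
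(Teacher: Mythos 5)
Your proposal is correct and follows essentially the same route as the paper's proof: a product-rule split of $\nabla_\W \kappa(\w^l)$ into the inner-product (prefactor) term, which vanishes for $k\neq l$ and yields the factor $2$ via Hessian symmetry, and the trace term, whose derivative is the sum of third-order partials. Your explicit Kronecker-delta bookkeeping and the remark on Schwarz's theorem are slightly more detailed than the paper's presentation, but the decomposition and both key identifications are identical.
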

\begin{proof}
For this proof, we simply apply product rule on the gradient of the regularizer, yielding two parts that we separately simplify. 
\begin{equation*}
    \begin{split}
        \nabla_\W & \kappa(\w^l) = \nabla_\W \sum_{s,s'=1}^d \inner{w^l_s}{w^l_{s'}} Tr\left(H_{s,s'}\right)\\
        =&\sum_{s,s'=1}^d \left(\nabla_\W\inner{w^l_s}{w^l_{s'}}\right) Tr\left(H_{s,s'}\right)\\
        &+\sum_{s,s'=1}^d \inner{w^l_s}{w^l_{s'}} \nabla_\W Tr\left(H_{s,s'}\right)\\
        =&\left[\underbrace{ \sum_{s,s'=1}^d \left(\frac{\partial}{\partial \w^k} \inner{w^l_s}{w^l_{s'}}\right) Tr\left(H_{s,s'}\right) }_{(I)}\right]_{1\leq k \leq L}\\       
        &+\left[\underbrace{ \sum_{s,s'=1}^d \inner{w^l_s}{w^l_{s'}} \frac{\partial}{\partial \w^k}Tr\left(H_{s,s'}\right) }_{(II)}\right]_{1\leq k \leq L}
    \end{split}
\end{equation*}
Let us simplify both parts, starting with $(I)$, which is $=0$ for all $k\neq l$. For $k=l$ it simplifies to
\begin{equation*}
    \begin{split}
        & \sum_{s,s'=1}^d \left(\frac{\partial}{\partial \w^l} \inner{w^l_s}{w^l_{s'}}\right) Tr\left(H_{s,s'}\right)\\
        =& \left[\sum_{s,s'=1}^d \left(\frac{\partial}{\partial \w^l_i} \inner{w^l_s}{w^l_{s'}}\right) Tr\left(H_{s,s'}\right)\right]_{1\leq i\leq d}\\
    \end{split}
\end{equation*}
Now for each $i\in[d]$ we have that
\begin{equation*}
    \begin{split}
        &\sum_{s,s'=1}^d \left(\frac{\partial}{\partial \w^l_i} \inner{w^l_s}{w^l_{s'}}\right) Tr\left(H_{s,s'}\right)\\
        =&2\sum_{s=1}^d w^l_s Tr\left(H_{s,i}\right)\enspace ,
    \end{split}
\end{equation*}
where we have used the symmetry of $H_{s,s'}$ and the commutativity of the inner product in the last step. 
Therefore, it holds that
\begin{equation*}
    \begin{split}
& \sum_{s,s'=1}^d \left(\frac{\partial}{\partial \w^l} \inner{w^l_s}{w^l_{s'}}\right) Tr\left(H_{s,s'}\right)\\
=& \left[2\sum_{s=1}^d \inner{w^l_s}{w^l_i} Tr\left(H_{s,i}\right)\right]_{1\leq i\leq d}\enspace .
    \end{split}
\end{equation*}

For the second part $(II)$, let $\w^k\in \RR^{O\times P}$. Then, $\frac{\partial}{\partial \w^k}Tr\left(H_{s,s'}\right)$ can be expressed as
\begin{equation*}
    \begin{split}
        \frac{\partial}{\partial \w^k}Tr\left(H_{s,s'}\right)=&\frac{\partial}{\partial \w^k}Tr\left[\frac{\partial^2 \loss(\W)}{\partial\w^l_{s,t}\partial\w^l_{s',t'}}\right]_{1\leq t,t'\leq m}\\
        =&\frac{\partial}{\partial \w^k}\sum_{t=1}^m \frac{\partial^2 \loss(\W)}{\partial\w^l_{s,t}\partial\w^l_{s',t}}\\
        =&\left[\sum_{t=1}^m \frac{\partial^3 \loss(\W)}{\partial\w^k_{o,p}\partial\w^l_{s,t}\partial\w^l_{s',t}}\right]_{\substack{1\leq p\leq P \\ 1\leq o \leq O}}
    \end{split}
\end{equation*}
Putting $(I)$ and $(II)$ together finally yields
\begin{equation*}
    \begin{split}
        \nabla_\W & \kappa(\w^l) = e^l \left[2\sum_{s=1}^d \inner{w^l_s}{w^l_i} Tr\left(H_{s,i}\right)\right]_{1\leq i\leq d}\\
        &+\left[\sum_{s,s'=1}^d \inner{w^l_s}{w^l_{s'}} \sum_{t=1}^m \frac{\partial^3 \loss(\W)}{\partial\w^k_{o,p}\partial\w^l_{s,t}\partial\w^l_{s',t}}\right]_{\substack{1\leq k \leq L\\1\leq p\leq P^k \\ 1\leq o \leq O^k}}
    \end{split}
\end{equation*}
where $e^l$ denotes the $l$-th standard unit vector in $\RR^L$.
\end{proof}

\subsection{Computational Complexity}
Computing the FAM regularizer requires computing the Hessian wrt. the weights $\w^l\in\RR^{d\times m}$ of the feature layer, which has computational complexity in $\bigo{d^2m^2}$. From this, the individual $H_{s,s'}$ can be selected. The inner product computation has complexity $\bigo{dm}$, so that the overall complexity of computing the regularizer is in $\bigo{d^2m^2}$.

In order to train with the FAM regularizer, we have to compute the gradient of the regularized loss wrt. the weights $\W$ of the network. Computing the gradient of the loss function in equation~\ref{eq:famloss} has complexity $\bigo{|\W|}$, where $|\W|$ denotes the number of parameters in $\W$. The computation of $\nabla_\W \kappa(\w^l)$ is decomposed into the sum of two parts in Lemma~\ref{lm:derivative}. The first part has complexity $\bigo{d^2m^2}$ for computing the Hessian and the inner product, as before. All parts in the sum, however, have already been computed when computing $\kappa(\w^l)$. The second part requires computing the derivative of the Hessians $H_{s,s'}$ wrt. each parameter in $\W$. Since we only need to compute the derivative wrt. the trace, i.e., the sum of diagonal elements, the complexity is in $\bigo{\W}$. Therefore, the overall complexity of computing the FAM regularizer is in 
\[
\bigo{|\W| + d^2m^2 +|\W|}=\bigo{|\W| + d^2m^2}\enspace . 
\]
That is, the additional computational costs for using the FAM regularizer is in $\bigo{d^2m^2}$ per iteration, i.e., in the squared number of weights of the selected feature layer.

In practice, computational time currently can exceed the vanila and SAM training time on $20\%-40\%$.
We also observe that GPU utilization for our implementation is still not optimal, which possibly can lead to delays in computation.

\subsection{A Simplified Relative Flatness Measure}
\label{subsec:simpl-measure}
A more computationally efficient approximation to relative flatness, proposed by \citet{petzka2019reparameterization}, does not iterate over individual neurons, but computes the weight norm of layer $l$ and the trace of the Hessian wrt. layer $l$:
\[
\widehat{\kappa}(\w^l)=\|\w^l\|^2_2 Tr\left(H \right)\enspace .
\]
Computing this measure not only avoids the loop over all pairs of neurons $s,s'\in [d]$, but also allows us to approximate the trace of the Hessian, e.g., with Hutchinson's method~\citep{yao2020pyhessian}. 
On top of the computational efficiency, the trace approximation reduces the memory footprint, enabling us to employ FAM regularization to even larger layers---including large convolutional layers. 

We provide details on the implementation of Hessian computation and Hessian trace approximation in Appendix~\ref{sec:hesscomp}.

	\section{Experiments}
\label{sec:experiments}
In the following section we describe the empirical evaluation of the proposed flatness regularization.
We compare the performance of FAM to the baseline without flatness related optimization and to SAM.
We use the SAM implementation for pytorch~\footnote{\url{https://github.com/davda54/sam}} with the parameters of the base optimizer recommended by the authors.
It should be mentioned here that no matter of its popularity there is no official pytorch implementation of the SAM optimizer, which results in multitude of different implementations for each of the paper using the approach.
Moreover, there are multiple tricks that should be considered when using SAM, e.g., one should take care of normalization layers and check on which of the two optimization steps they are active or non-active.
We run SAM for the same amount of epochs that FAM and simple optimization, no matter that in the original work the authors doubled the amount of epochs for non-SAM approaches due to the doubled run time, thus giving SAM an advantage in our experiments.
Reported result for one of the pytorch implementations of SAM on CIFAR10 with ResNet20 is $93.5\%$ test accuracy~\footnote{\url{https://github.com/moskomule/sam.pytorch}}.
This is the closest reported result to our setup and it should be expected that ResNet18 shows worse result than ResNet20.
Unfortunately, the results for CIFAR100, SVHN, and FashionMNIST are not reported in the implementations of SAM for pytorch. 

We use the FAM regularizer computed on the penultimate layer (or bottleneck layer), since it was demonstrated to be predictive of generalization in \citet{petzka2021relative}. Investigating the impact of the regularizer on other layers is left for future work.

\paragraph{Note on other flat-minima optimizers:\newline}
There are several extensions of SAM~\citep{kwon2021asam, zhuangsurrogate, du2021efficient, liu2022towards, liu2022random} and other flat-minima optimizers, e.g., \citep{chaudhari2019entropy, sankar2021deeper}.
We follow~\citet{kaddourflat} and do not consider them in this work due to their computational cost and/or lack of performance gains compared to original SAM.

\subsection{Image Classification}

Standard datasets for image classification are the baseline experiments that confirm the effectiveness of the proposed regularization.
In particular, we worked with CIFAR10 and CIFAR100~\citep{krizhevsky2009learning}, SVHN~\citep{netzer2011reading}, and FashionMNIST~\citep{xiao2017/online}.
We compare our flatness regularized training to the state-of-the-art flatness regularizer SAM.
For this group of experiments we used the setups from the original SAM paper in order to compare to its performance.
Nevertheless, due to the different implementation, the exact numbers reported seem to be unachievable---while we still see the improvement from using SAM optimizer, both no regularization baseline and SAM baseline are lower than in the original paper.
For all experiments in this group we use the original neuronwise flatness measure for regularization without approximations introduced in Sec.~\ref{subsec:simpl-measure}.

\subsubsection{CIFAR10}
We have chosen ResNet18 as an architecture to solve CIFAR10.
While ResNet18 is not the state of the art for this problem, it allows to confirm the hypothesis about performance of our method.
The reported accuracy of this architecture on CIFAR10 is $95.55\%$.
In our experiments we compare this baseline, that is not using flatness-related optimizations to SAM approach and our proposed regularization.
Standard augmentation strategy is applied, including randomized cropping and horizontal flipping and normalization of the images.
For baseline training we use the following parameters of optimization: SGD with batch size $64$, weight decay of $5e-4$, momentum $0.9$, and cosine annealing learning rate starting at $0.03$ during $250$ epochs.
For FAM the optimizer parameters are kept same and $\lambda$ selected to be $0.1$.
Finally SAM was ran with SGD with a scheduler learning rate $0.01$ and momentum $0.9$.

We report the results achieved in Table~\ref{table:image-classification} in the row corresponding to CIFAR10.

\subsubsection{CIFAR100}
For solving this dataset we follow the approach taken by \citet{foret2021sharpnessaware}. 
We use an EfficientNet~\citep{tan2019efficientnet} (EffNet-B7) that is pretrained on ImageNet and then finetune it for CIFAR100.
In order to obtain a good finetuning result the inputs should be at least in the ImageNet format ($224 \times 224$) which significantly slows down the training.
For standard training and FAM regularized training, the Adam optimizer had consistently the highest performance (compared to SGD and rmsprop) with a batch size of $B=32$. The architecture achieves a baseline accuracy of $84.6$ without regularization, and SAM achieves an accuracy of $85.8$. The FAM regularizer improves the accuracy to $87.15$.
Note, that different from other setups, where both SAM and FAM were used together with the same hyperparameters as the vanila training, here we optimized parameters separately to avoid overfitting.

We report the results achieved in Table~\ref{table:image-classification} in the row corresponding to CIFAR100.

\subsubsection{SVHN and FashionMNIST}
Both SVHN and FashionMNIST problems are reported to reach state-of-the-art performance with SAM optimization using WideResNet28-10 architecture~\citep{zagoruyko2016wide}.
It should be noted that SAM achieves the reported state-of-the-art result on these datasets when combined together with shake-shake regularization technique~\citep{gastaldi2017shake}, which we omitted.

The results reported by \citet{foret2021sharpnessaware} for SVHN are obtained using the training dataset that includes extra data (overall $\sim 600000$ images).
Due to the time constraints we report results of training using only main training dataset ($\sim 70000$ images).
We apply AutoAugment SVHN policy~\citep{cubuk2018autoaugment}, random cropping and horisontal flip, and cutout~\citep{devries2017cutout} with $1$ hole of length $16$.
Our training parameters are $100$ epochs, learning rate of $0.1$ with a multistep decay by $0.2$ after $0.3$, $0.6$ and $0.8$ of the training epochs, batch size of $128$, optimizer is Nesterov SGD with momentum of $0.9$ and weight decay of $5e-4$.
For FAM we use $\lambda=0.1$.

We modify FashionMNIST to have three channels, resize to $32 \times 32$, apply cutout with $1$ hole of length $16$, and normalize by $0.5$.
The training of FashionMNIST is very unstable and has oscillating learning curves with and without regularization.
The used batch size is $64$, learning rate is $0.01$ with the same learning rate scheduler as for SVHN, the training is done for $200$ epochs. Weight decay and momentum are set as in SVHN training.

Finally, in order to apply more computationally expensive neuronwise flatness regularization, we add one more penultimate fully-connected layer in the architecture of WideResNet with $64$ neurons.
Our experiments reveal that this additional layer does not change the outcome of the training in case of non-flatness regularized run.

With the described setup we did not achieve the accuracy reported in the original paper, that are $0.99 \pm 0.01$ error for SAM on SVHN with auto-augmentation and $1.14 \pm 0.04$ for baseline training on SVHN with auto-augmentation; $3.61 \pm 0.06$ error for SAM on FashionMNIST with cutout and $3.86 \pm 0.14$ for baseline training on FashionMNIST with cutout. We report the results achieved in Table~\ref{table:image-classification} in the rows corresponding to SVHN and FashionMNIST.

\begin{table*}[ht]
\centering
\caption{Results for Image Classification Tasks}
\begin{tabular}[t]{lccc} 
\toprule
 &  Baseline &  SAM & FAM \\
\hline
 \textbf{CIFAR10}      & $95.53 \pm 0.0001$ & $95.61 \pm 0.001$          & $\mathbf{95.62 \pm 0.002 }$  \\
\textbf{CIFAR100}      & $84.48 \pm 0.12$     & $85.72 \pm 0.08$            & $\mathbf{87.2 \pm 0.05}$  \\
 \textbf{SVHN}         & $97.72 \pm 0.02$   & $\mathbf{97.84 \pm 0.05}$  & $97.81 \pm 0.07$  \\
 \textbf{FashionMNIST} & $94.57 \pm 0.28$   & $\mathbf{94.99 \pm 0.02}$  & $94.6 \pm 0.04$  \\
\bottomrule
\end{tabular}
\label{table:image-classification}
\end{table*}


\subsection{Medical Shape Reconstruction}
3D shape reconstruction has important applications in both computer vision \citep{smith20203d,chibane2020implicit} and medical imaging \citep{amiranashvili2022learning,li2021autoimplant}. 
Machine learning methods for shape reconstruction have become increasingly popular in recent years, however, often suffer from bad generalization, i.e., a neural network cannot generalize properly to shape variations that are not seen during training. 
In this experiment, we demonstrate that FAM regularizer can effectively mitigate the generalization problem in a skull shape reconstruction task, where a neural network learns to reconstruct anatomically plausible skulls from defective ones \citep{li2021autoimplant,kodym2020skull}.
Here, due to the large size of the layers, we used the approximated layerwise flatness measure for FAM optimization.
We do not include SAM baseline here, while the general question here is whether flatness can aid in generalization for the current task.

\subsubsection{Dataset}
The skull dataset used in this experiment contains $100$ binary skull images for training and another $100$ for evaluation.
The surface of a skull shape is constituted by the non-zero voxels (i.e., the  `1's), and we create defective skulls by removing a portion of such voxels from each image.
For the evaluation set, two defects are created for each image - one is similar to the defects in the training set while the other is significantly different in terms of its shape and size, as well as its position on the skull surface.
The dimension of the skull images is $64^3$.

\subsubsection{Network Architecture and Experimental Setup}
The neural network ($\sim 1$M trainable parameters) follows a standard auto-encoder architecture, in which five two-strided convolutional and deconvolutional layers are used for downsampling and upsampling respectively.
The output of the last convolutional layer is flattened and linearly mapped to an eight-dimensional latent code, which is then decoded by another linear layer before being passed on to the first deconvolution. 
The network takes as input a defective skull and learns to reconstruct its defectless counterpart. 

As a baseline we train the network using a Dice loss \citep{milletari2016v}, and a Dice loss combined with the FAM regularizer 
, which is applied to the second linear layer (of size $64\times 8$) of the network. 
We experimented with different coefficients $\lambda$ that weigh the regularizer against the Dice loss. All experiments use the Adam optimizer with a constant learning rate of $10^{-4}$.
The trained models are evaluated on the two aforementioned evaluation sets, using Dice similarity coefficient (DSC), Hausdorff distance (HD), and $95$ percentile Hausdorff distance (HD95). DSC is the main metric in practice for skull shape reconstruction~\citep{li2021autoimplant}, measuring how well two shapes overlap (the higher the better\footnote{The Dice loss (Figure~\ref{training_dice_loss}), on the contrary, is usually implemented as $1 - DSC$, which we minimize during training.}), while the distance measures i.e., HD and HD95 are supplementary.

\subsubsection{Results and Discussion}
Figure~\ref{training_dice_loss} shows the Dice loss curves under different weighting coefficients $\lambda$. 
Table~\ref{table:skullrec_score} shows the quantitative results on the two evaluation sets, and Figure~\ref{fig:training_dice_loss:boxplot} shows the distribution of the evaluation results for $\lambda=0.02, 0.002, 0.0006$ and the baseline.
The $DSC$ (100), $HD$ (100)  and $HD95$ (100) columns in Table \ref{table:skullrec_score} show the evaluation results at an intermediate training checkpoint (epoch $100$).

These results reveal several interesting findings: (i) At both the intermediate (epoch=100) and end checkpoint (epoch=200), the training loss of the baseline network is clearly lower than that of the regularized networks (Figure~\ref{training_dice_loss}), whereas its test accuracy is obviously worse than its regularized counterparts in terms of all metrics (Table \ref{table:skullrec_score}); (ii) The baseline network achieves higher test accuracy (DSC) at the intermediate checkpoint than at the end checkpoint, which is a clear indicator of overfitting, while the test accuracy of a properly regularized network (e.g., $\lambda=0.02, 0.002$) on either evaluation set 1 or evaluation set 2 keeps improving as training progresses; (iii) Even a very loose regularization (e.g., $\lambda=0.0006$) can prevent the Dice loss from decreasing until overfitting, as opposed to the baseline network (Figure \ref{training_dice_loss}); (iv) It is also worth mentioning that the scores on both evaluation sets stay essentially unchanged for the FAM-regularized network (e.g., $\lambda=0.02$), indicating that moderately altering the defects (e.g., defect shape, size, position) does not affect the network's performance, while in contrast, the baseline network performs worse on evaluation set $2$ than on evaluation set $1$ in terms of all metrics.

\begin{figure}
\centering
\includegraphics[width=0.9\linewidth]{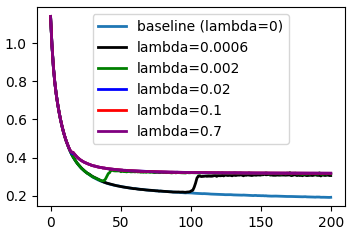}
 \caption{Curves of the Dice loss ($y$ axis) with respect to training epochs ($x$ axis), under different $\lambda$. Note that the red ($\lambda=0.1$) and purple ($\lambda=0.7$) lines overlap in this plot.}
\label{training_dice_loss}
\end{figure}

\begin{table*}[h!]
\centering
\caption{Quantitative Results for Skull Shape Reconstruction Given Different $\lambda$}
\scalebox{0.7}{
\begin{tabular}[t]{lccccccccccccc} 
\toprule
 \multirow{2}{*}{methods}  & \multicolumn{6}{c}{\textbf{evaluation set 1}} & \multicolumn{6}{c}{\textbf{evaluation set 2}}  \\ 

                       & $DSC$          & $DSC$ (100)     & $HD$           & $HD$ (100)     & $HD95$         & $HD95$ (100)  && $DSC$          & $DSC$ (100)    & $HD$           & $HD$ (100)     & $HD95$         & $HD95$ (100)   \\ 
\hline
baseline               & $0.6464$             & $0.6569$       & $7.0130$       & $7.1787$        & $2.0635$       & $2.0422$       && $0.6413$      & $0.6489$      & $7.1421$       & $7.1939$       & $2.0924$       & $2.1371$       \\ \hdashline
FAM,  $\lambda=0.0006$ & $0.7155$             & $0.6817$       & $6.5531$       & $6.7772$        & $1.8202$       & $\mbf{1.8281}$ && $0.7156$       & $0.6762$       & $6.5542$       & $7.0115$       & $1.8178$       & $1.9088$       \\
FAM,  $\lambda=0.002$  & $0.7173$             & $\mbf{0.7175}$ & $\mbf{6.4813}$ & $6.5478$        & $\mbf{1.8175}$ & $\mbf{1.8281}$ && $0.7175$       & $\mbf{0.7176}$   & $\mbf{6.4813}$ & $6.5478$       & $\mbf{1.8148}$ & $\mbf{1.8281}$ \\
FAM,  $\lambda=0.02$   & $\mbf{0.7176}$       & $0.7168$       & $6.5221$       & $\mbf{6.5271}$  & $1.8210$       & $1.8344$       && $\mbf{0.7176}$  & $0.7168$       & $6.5221$       & $\mbf{6.5271}$ & $1.8210$       & $1.8344$       \\
FAM,  $\lambda=0.1$    & $\mbf{0.7176}$       & $0.7169$       & $6.5085$       & $\mbf{6.5222}$  & $1.8210$       & $1.8345$       && $\mbf{0.7176}$  & $0.7169$         & $6.5085$       & $\mbf{6.5222}$         & $1.8210$       & $1.8345$              \\
FAM,  $\lambda=0.7$    & $\mbf{0.7177}$       & $0.7169$       & $6.5202$       & $6.5389$        & $1.8210$       & $1.8359$       && $\mbf{0.7177}$   & $0.7169$         & $6.5202$       & $6.5389$         & $1.8210$       & $1.8359$              \\
\bottomrule
\end{tabular}}
\label{table:skullrec_score}
\end{table*}

Choosing a proper $\lambda$ is important for a desired reconstructive performance. A large $\lambda$ enforces a flat(ter) curve of the loss with respect to the weights of the second linear layer, which is responsible for decoding the latent codes. However, over-regularization (in our case $\lambda=0.1, 0.7$) can lead to unvaried shape reconstructions by the decoder, since, in order for the loss to remain unchanged, the second linear layer has to give the same decoding for different latent codes~\footnote{Different skull shapes are expected to be encoded differently through the downsampling path of the auto-encoder.}. Therefore, the quantitative results for $\lambda=0.1, 0.7$ in Table \ref{table:skullrec_score} should be interpreted with care, i.e., the over-regularized networks `find' a universal reconstruction that somehow matches well with different evaluation cases (hence achieving relatively high DSC), which nevertheless defies the rule of case-specific reconstruction. 

\begin{figure}
\centering
\includegraphics[width=1\linewidth]{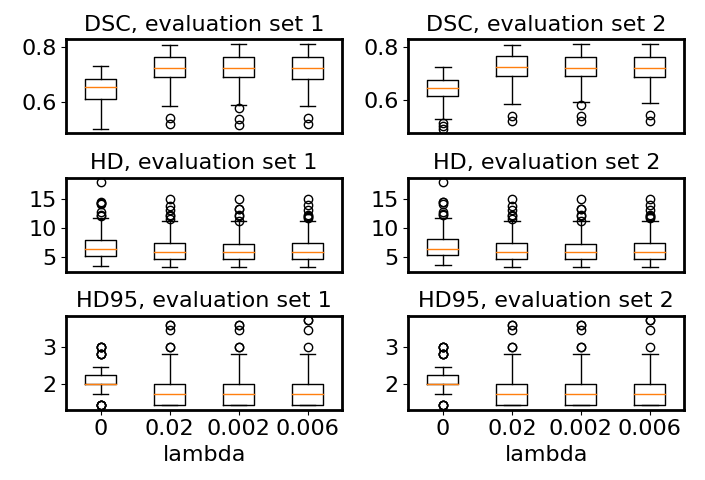}
 \caption{Boxplots of DSC, HD and HD95 given different $\lambda$ (x axis) on the two evaluation sets.}
\label{fig:training_dice_loss:boxplot}
\end{figure}

\subsection{Transformers}

Since the introduction of transformers \citep{attention_is_all_you_need}, large language models have revolutionized natural language processing by consistently pushing the state-of-the-art in various benchmark tasks \citep{devlin-etal-2019-bert, electra, deberta}. However, a recurring challenge in the fine-tuning process of these models is the occurrence of instabilities \citep{hua2021noise, mosbach2021on}. These instabilities can negatively impact the performance and reliability of the fine-tuned models. In the following section we demonstrate how the application of FAM can improve the downstream performance of transformers. We do not compare with SAM both for the reasons stated in the previous set of the experiments and also due to impossibility to integrate additional gradient iteration into the used implementation of BERT.

We fine-tune $BERT_{BASE}$ (110 million parameters) \citep{devlin-etal-2019-bert} to the Recognizing Textual Entailment (RTE) dataset \citep{rte_dataset} from the General Language Understanding Evaluation benchmark \citep{wang-etal-2018-glue}. The dataset consists of sentence pairs with binary labels that indicate whether the meaning of one sentence is entailed from its counterpart. In the past, this dataset was found to be particularly prone to instabilities \citep{phang2018sentence}.

\begin{table}[ht]
\centering
\caption{Results for the fine-tuning on the RTE validation set.}
\begin{tabular}[t]{lcc} 
\toprule
 &  Baseline & FAM \\
\hline
 \textbf{Accuracy}      & $0.67364$ & $\mathbf{0.6982}$  \\
\textbf{Standard Deviation} & $0.018$ & $\mathbf{0.0154}$  \\
 \textbf{Max}         & $0.6931$ & $\mathbf{0.7184}$ \\
\bottomrule
\end{tabular}
\label{table:rte-results}
\end{table}
In stark contrast to other experiments, we chose a much larger weighting coefficient $\lambda=3\mathrm{e}{6}$, as lower values had no influence on the training. Our training setup involved a learning rate of $\lambda=2\mathrm{e}{-5}$, a batch size of $32$, and a maximum sequence length of $128$ for $20$ epochs. We report the average development set accuracy across five runs with different random seeds. Table \ref{table:rte-results} presents the results of this experiment. 
We observed a progressive increase in validation loss throughout the training when the regularizer was not employed, indicating severe overfitting. While this phenomenon persisted with FAM, its effect was less pronounced, as depicted in Figure~\ref{training_rte_loss}.

\begin{figure}
\centering
\includegraphics[width=0.9\linewidth]{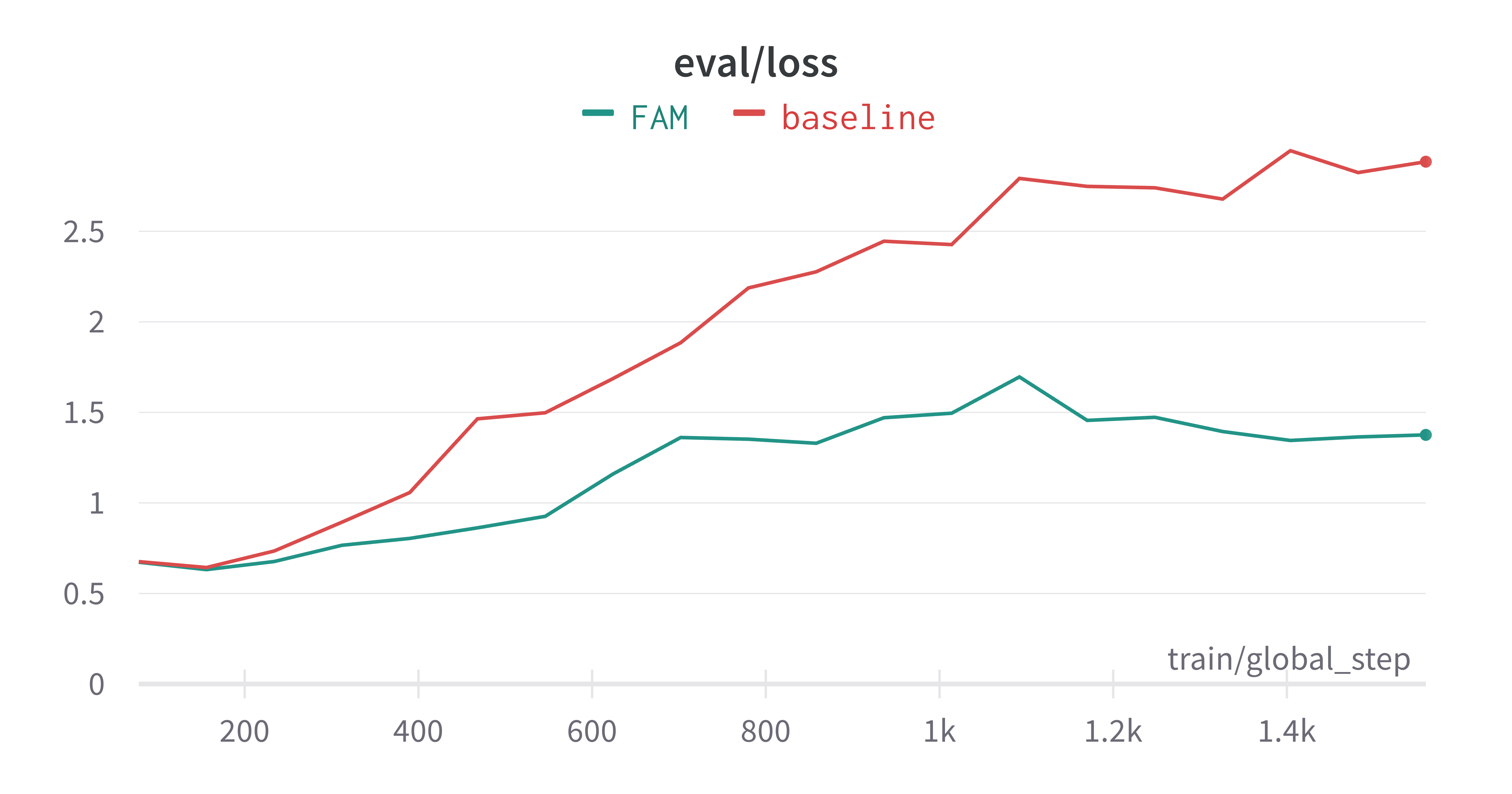}
 \caption{Development loss of the RTE training.}
\label{training_rte_loss}
\end{figure}

	\section{Discussion and Conclusion}
\label{sec:discussion}

We have shown that regularization based on the theoretically sound relative flatness measure improves generalization in a wide range of applications and model architectures, outperforming standard training and sometimes SAM. 

In our experiments (except for the skull reconstruction experiments, due to the specific architecture of the network), we have chosen the penultimate layer to compute relative flatness, as suggested by~\citet{petzka2021relative}. Their theory ensures that achieving flatness in any one layer suffices to reach good generalization.
We leave a comprehensive empirical study of the impact of the choice of layer (or even using multiple layers) on model quality for future work.
It can be also investigated whether flatness regularized on one of the layers also changes the flatness of other layers or not.

Relative flatness is connected to generalization under the assumption of locally constant labels in the representation.
This assumption holds already for the input space in many applications (e.g., image classification, and NLP)---the definition of adversarial examples hinges on this assumption. 
It implies, however, that flatness is not connected to generalization for tasks where the assumption is violated.
The recent study by~\citet{kaddourflat} supports this empirically by showing that regularizing wrt. flatness is not always beneficial. 
For future work it would be interesting to verify this study with FAM, testing the assumption of locally constant labels, and expanding it to further tasks.

While current implementation of the FAM regularizer allows for achieving better performance, the performance with respect to the space consumption can be improved, as well as computational time.
This currently also limits the applicability to convolutional layers, since treating them like a standard layer would increase the number of parameters greatly.
This can be overcome by determining the correct structure of the FAM regularizer for convolutional layers and is an interesting direction for future work. 


    
	\pagebreak
	{\small
    \bibliographystyle{icml2023}
    \bibliography{references}
    }
 	\vfill
 	\pagebreak
	\appendix
    \onecolumn
	\section{Hessian Computation and Approximation}
\label{sec:hesscomp}
In practice, the training time for FAM regularization depends on the method used for calculating the Hessian, respectively approximating its trace in case of the simplified relative flatness measure. In the following, we discuss several practical approaches in pytorch~\citep{paszke2019pytorch}.

\subsection{Computation of the Full Hessian}
Computing the Hessian, i.e., the second derivatives wrt. a neural network's weights, can straight-forwardly be done in pytorch using its autograd library. This method, however, is not optimized for runtime. The torch.autograd library also provides an experimental vectorized version of the Hessian computation. It uses a vectorization map as the backend to vectorize calls to autograd.grad, which means that it only invokes it once instead of once per row, making it more computationally efficient. We compare the \emph{non-vectorized} to the \emph{vectorized} variant of torch.autograd. Recently, the pytorch library functorch (in beta) provided a fast Hessian computation method build on top of the autograd library and also using a vectorization map. Additionally, it uses XLA, an optimized compiler for machine learning that accelerates linear algebra computations. This further accelerates Hessian computation, but does not yet work with all neural networks---in particular, the functorch Hessian computation requires batch normalization layers to not track the running statistics of training data. In Figure~\ref{fig:Hessian_runtime_comp} we show that using the vectorized approach substantially reduces computation time by up to three orders of magnitude. For larger Hessians, the functorch library further improves runtime over the vectorized autograd method by an order of magnitude. 
All experiments are performed on an NVIDIA RTX A6000 GPU.

\begin{figure}
\centering
\includegraphics[width=0.5\linewidth]{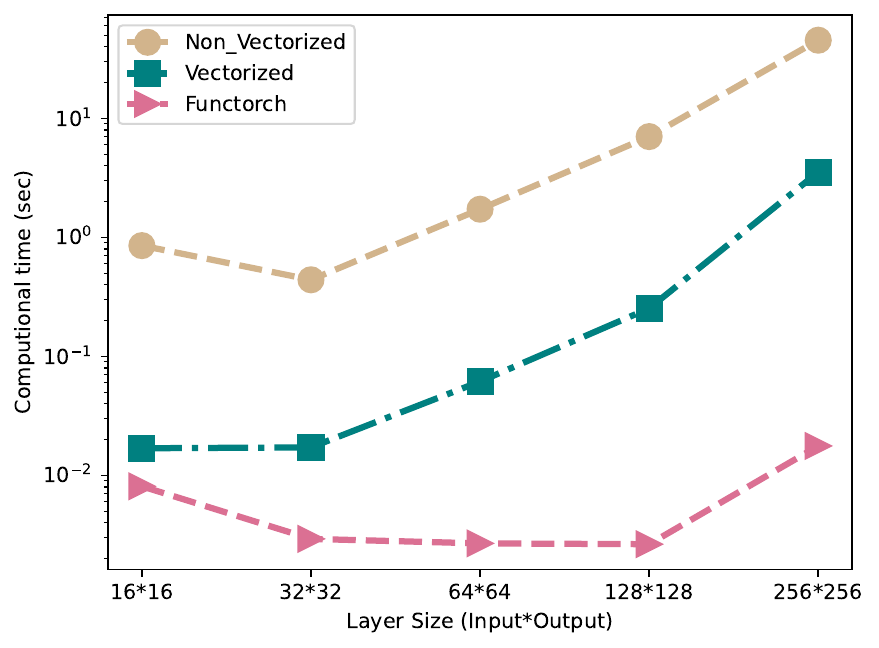}
 \caption{Comparing non-vectorized and vectorized autograd, as well as functorch in terms of the computation time for computing the full Hessian of a single neural network layer for different layer sizes.}
\label{fig:Hessian_runtime_comp}
\end{figure}

\subsection{Computation of the Trace of the Hessian}
When the layers are high-dimensional, forming the full Hessian can be memory and computationally expensive. Since FAM requires the calculation of the trace of a Hessian, we apply the trick of using the Hutchinson's method~\citep{hutchinson1990stochastic} to approximate the trace of the Hessian. The version of Hutchinson's trick we use is described as follows:

 Let $A \in \mathbb{R}^{D \times D}$ and $v \in \mathbb{R}^D$ be a random vector such that $\mathbb{E}\left[v v^T\right]=I$. Then,
$$
\operatorname{Tr}(A)=\mathbb{E}\left[v^T A v\right]=\frac{1}{V} \sum_{i=1}^V v_i^T A v_i .
$$
where $v$ is generated using Rademacher distribution and $V$ is the number of Monte Carlo samples. The intuition behind this method is that by averaging over many random vectors, we can obtain an estimate of the trace of the matrix. It has been proved that the trace estimator converges with the smallest variance to the trace if we use Rademacher random numbers~\citep{tropp2012user}. This method is in general very useful when we need to compute the trace of a function of a matrix.
\begin{figure}
\centering
\includegraphics[width=0.5\linewidth]{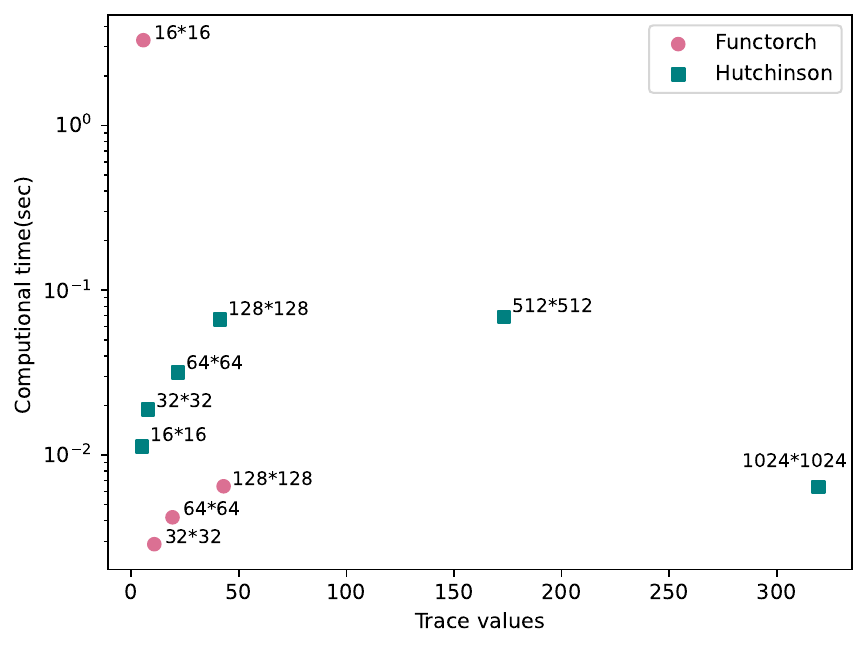}
 \caption{Computational time of the trace of the hessian for different layer sizes using Functorch and Hutchinson's method}
\label{Hessian_2}
\end{figure}

Computational time for the direct functorch computation of the Hessian trace and for the Hutchinson's trick is shown in Figure \ref{Hessian_2}.

\end{document}